\newtheorem{definition}{Definition}[section]
\newtheorem{thm}{Theorem}
\def\astrightarrow{\put(0.2,-2.5){*}\rightarrow}
\def\astleftarrow{\leftarrow\put(-3.6,-2.5){*}}
\newcommand{\eat}[1]{}
\newcommand*{\indep}{%
  \mathbin{%
    \mathpalette{\@indep}{}%
  }%
}
\newcommand*{\nindep}{%
  \mathbin{%                   % The final symbol is a binary math operator
    \mathpalette{\@indep}{\not}% \mathpalette helps for the adaptation
  }%
}
\newcommand*{\@indep}[2]{%
  % #1: math style
  % #2: empty or \not
  \sbox0{$#1\perp\m@th$}%        box 0 contains \perp symbol
  \sbox2{$#1=$}%                 box 2 for the height of =
  \sbox4{$#1\vcenter{}$}%        box 4 for the height of the math axis
  \rlap{\copy0}%                 first \perp
  \dimen@=\dimexpr\ht2-\ht4-.2pt\relax
  \kern\dimen@
  {#2}%
  \kern\dimen@
  \copy0 %
}
\begin{document}

\title{Causal query in observational data with hidden variables}
\author{Debo Cheng\institute{School of Information Technology and Mathematical Sciences, University of South Australia, Mawson Lakes, SA 5095, Australia. E-mail: chedy055@mymail.unisa.edu.au and Jiuyong.Li@unisa.edu.au.}\hspace{3pt}, Jiuyong Li$^{1}$, Lin Liu$^{1}$, Jixue Liu$^{1}$, Kui Yu\institute{School of Computer Science and Information Engineering, Hefei University of Technology, Hefei, China.}\hspace{3pt}, and Thuc Duy Le$^{1}$ \\
%\institute{School of IT and Mathematical Sciences, University of South Australia, Mawson Lakes, SA 5095, Australia.}\\ %E-mail:chedy055@mymail.unisa.edu.au \\
%and {Jiuyong.Li,Lin.Liu,Jixue.Liu,Thuc.Le}@unisa.edu.au\\
}
%\institute{Secret Lab of Experimental Stuff\\E-mail: chedy055@mymail.unisa.edu.au.}
 % University of Somewhere}
%\affil{\institute{School of IT and Mathematical Sciences, University of South %Australia, Mawson Lakes, SA 5095, Australia, E-mail: chedy055@mymail.unisa.edu.au.}}
%\author{{Debo Cheng \and Jiuyong Li} \and Lin Liu \and Jixue Liu \and Kui Yu \and Thuc Duy Le}

\maketitle

\begin{abstract}
 This paper discusses the problem of causal query in observational data with hidden variables, with the aim of seeking the change of an outcome when ``manipulating'' a variable while given a set of plausible confounding variables which affect the manipulated variable and the outcome. Such an ``experiment on data'' to estimate the causal effect of the manipulated variable is useful for validating an experiment design using historical data or for exploring confounders when studying a new relationship. However, existing data-driven methods for causal effect estimation face some major challenges, including poor scalability with high dimensional data, low estimation accuracy due to heuristics used by the global causal structure learning algorithms, and the assumption of causal sufficiency when hidden variables are inevitable in data. In this paper, we develop a theorem for using local search to find a superset of the adjustment (or confounding) variables for causal effect estimation from observational data under a realistic pretreatment assumption. The theorem ensures that the unbiased estimate of causal effect is included in the set of causal effects estimated by the superset of adjustment variables.
 Based on the developed theorem, we propose a data-driven algorithm for causal query. Experiments show that the proposed algorithm is faster and produces better causal effect estimation than an existing data-driven causal effect estimation method with hidden variables. The causal effects estimated by the proposed algorithm are as accurate as those by the state-of-the-art methods using domain knowledge.
\end{abstract}

\section{Introduction}
\label{Sec:Intro}
Data is frequently used for various decision making which involves causal evidence seeking (or causal query) in observational data~\cite{pearl2009causality,li2016observational,yu2019learning,yu2019multi}. For example, a biologist is planning for a biological experiment, and she has a collection of data from previous experiments by other researchers (those experiments do not need to have the same objective as hers)~\cite{cai2013causal,le2013inferring}. She can plan the experiment on the data by ``manipulating'' the variable which is to be modified in the planned experiment, i.e. the treatment variable to see if the outcome is changed as expected. This type of query is useful in many real-world applications. When a public policy is in the design phase, existing data can be used to assess the policy to find out whether the policy would produce the desired outcome and what major factors would affect the outcome. In a marketing campaign, similar queries can be made to a dataset to help plan a successful campaign.

In this paper, we consider the problem that a user queries a dataset for a causal effect, i.e. how much the interested outcome $Y$ will change due to a change of the treatment $W$. The answer depends on what other variables affect $W$ and $Y$, called confounders or adjustment variables. Users do not know the adjustment variables in most cases. For example, when a biologist studies how a gene causes cancer, she may not know (or only partially know) the other genes which are causal factors of the cancer and regulate this gene under study. So, the adjustment variables are a part of the answer to seek too and the query answer will be a set of pairs each containing a set of possible adjustment variables (called an \emph{adjustment set} in the paper) and the corresponding causal effect. To confirm which pair corresponds to the real-world mechanism, a user can design an experiment by physically controlling the variables in an adjustment set while manipulating the treatment variable, or pick up one or multiple returned adjustment sets consistent with her knowledge of the system as plausible answers.

From data, especially data with hidden variables~\cite{cai2019triad,richardson2002ancestral,yu2018mining}, it may not be possible to find an exact adjustment set. The number of returned possible adjustment sets can be very large, which will make the query answer unusable. For example, a biologist has only a limited budget to try a few experiments, and she will not be able to try many possible adjustment sets. An advertisement company can try a limited number of A/B tests to make a promotion decision. Another constraint is the size of an adjustment set. A long adjustment set containing many variables will complicate the design of an experiment and make it hard to explain experimental results.

The causal query problem studied in this paper is related to adjustment variable search and causal effect estimation in data. However, existing works on causal effect estimation often have an adjustment set given~\cite{athey2018approximate,imai2014covariate,imbens2015causal} or do not explicitly indicate the adjustment set for the causal effect~\cite{johansson2016learning,louizos2017causal,shalit2017estimating}. Hence they are not useful to solve our problem.

Graphical causal modelling is a principled approach to adjustment set search in data and causal effect estimation. Pearl has proposed the \emph{back-door criterion} for identifying an adjustment set based on a given causal DAG (direct acyclic graph), and the \emph{do calculus} for deriving identifiable causal effects from the given causal DAG and data~\cite{pearl2009causality}. The back-door criterion has become the main principle for identifying adjustment sets.

When there are no hidden variables or more precisely the \emph{causal sufficiency} assumption is satisfied, i.e. all common causes are observed~\cite{spirtes2000causation}, we can learn from data a Markov equivalence class of causal DAGs for causal effect estimation. IDA is an algorithm for estimating causal effects directly from data~\cite{maathuis2009estimating}. As many equivalent causal DAGs can be learned from a dataset, IDA returns a multiset of causal effects with different adjustment sets blocking the back-door paths into the treatment variable $W$. Recently, Johansson et al. utilised deep learning to learn balanced representations for counterfactual inference~\cite{johansson2016learning}. Furthermore, Shalit et al. gave a meaningful and intuitive error bound to guide deep neural networks for estimating individual causal effects~\cite{shalit2017estimating}. However, the algorithms based on deep learning do not provide an explicit adjustment variable set.

When there are hidden variables, LV-IDA~\cite{malinsky2016estimating} extends IDA based to a MAG (Maximal Ancestral Graph) which represents a causal structure with hidden variables. LV-IDA also returns a multiset of causal effects. Furthermore, LV-IDA has low efficiency for high-dimensional or large data. Hyttinen et al. presented an ASP constraint solver for causal inference directly in data without assuming causal sufficiency~\cite{hyttinen2015calculus}, but it is impractical for high-dimensional data due to the fact that the query-based techniques need to perform a complete search for solutions. Entner et al. proposed a data-driven approach to estimating causal effect from data with hidden variables~\cite{entner2013data}, but with impractically high time complexity. Louizos et al. developed a new neural network latent variables model to estimate population causal effects~\cite{louizos2017causal}, but the model relies on a correct proxy variable estimation and does not provide an explicit adjustment set.

In this paper, we aim to develop a practical solution for efficient and accurate causal query in data without assuming causal sufficiency. The contributions of the work are:
\begin{itemize}
  \item We have developed the theorem for finding a superset of adjustment variables for unbiased causal effect estimation based on the local causal structure learning in a MAG under a realistic pretreatment assumption. The theorem supports efficient data-driven causal effect estimations without assuming causal sufficiency in data. To our best knowledge, they are the first theorem to support such a search.
  \item We have proposed an efficient and effective Data-drIven Causal Effect estimation (DICE) algorithm. DICE is faster than LV-IDA, the only practical data-driven causal effect estimation method with hidden variables in graphical causal modelling. Extensive experiments show that DICE works in datasets where LV-IDA fails, and produces better causal effect estimation than LV-IDA. The causal effects estimated by DICE are as accurate as those by the state-of-the-art methods using domain knowledge.
\end{itemize}

\section{Preliminaries}
\label{Sec:Pre}
\subsection{Notation and basic definitions}
We use upper case letters to represent variables and bold-faced upper case letters to denote sets of variables. Let $\mathcal{G}=(\mathbf{V}, \mathbf{E})$ be a graph, where $\mathbf{V}=\{V_{1}, \dots, V_{p}\}$ is the set of nodes and $\mathbf{E}$ a set of edges. Two nodes are adjacent if there is an edge between them. The set of all the nodes adjacent to node $V$ is denoted as $Adj(V)$. A path $\pi$ from $V_{s}$ to $V_{e}$ is a sequence of distinct nodes $<V_{s}, \dots, V_{e}>$ such that every pair of successive nodes are adjacent in $\mathcal{G}$. A sub-path of $\pi$ from $V_i$ to $V_j$ is denoted by $\pi(V_i, V_j)$. In $\mathcal{G}$, if there exists $V_i\rightarrow V_j$, $V_i$ is a parent of $V_j$ and we use $Pa(V_j)$ to denote the set of all parents of $V_j$. If there exists $V_i\leftrightarrow V_j$, $V_i$ is a spouse of $V_j$ and we use $Sp(V_j)$ to denote the set of all spouses of $V_j$. A path from $V_s$ to $V_e$ is a directed path if for any pair of adjacent nodes $V_i$ and $V_j$ in the path where $i\le j$, $V_i$ is a parent of $V_j$. $V_s$ is an ancestor of $V_e$ if there is a directed path from $V_{s}$ to $V_{e}$. The ancestor set of $V_e$ is denoted as $An(V_e)$.

A DAG is a directed graph without directed cycles (See an example of directed cycle in Figure~\ref{Fig:cyce_dag_mag} (a)). When a DAG satisfies the following Markov condition and faithfulness assumption, we can read dependency/independency in a distribution from the DAG.

\begin{figure}
\centering
  \includegraphics[width=2.5in]{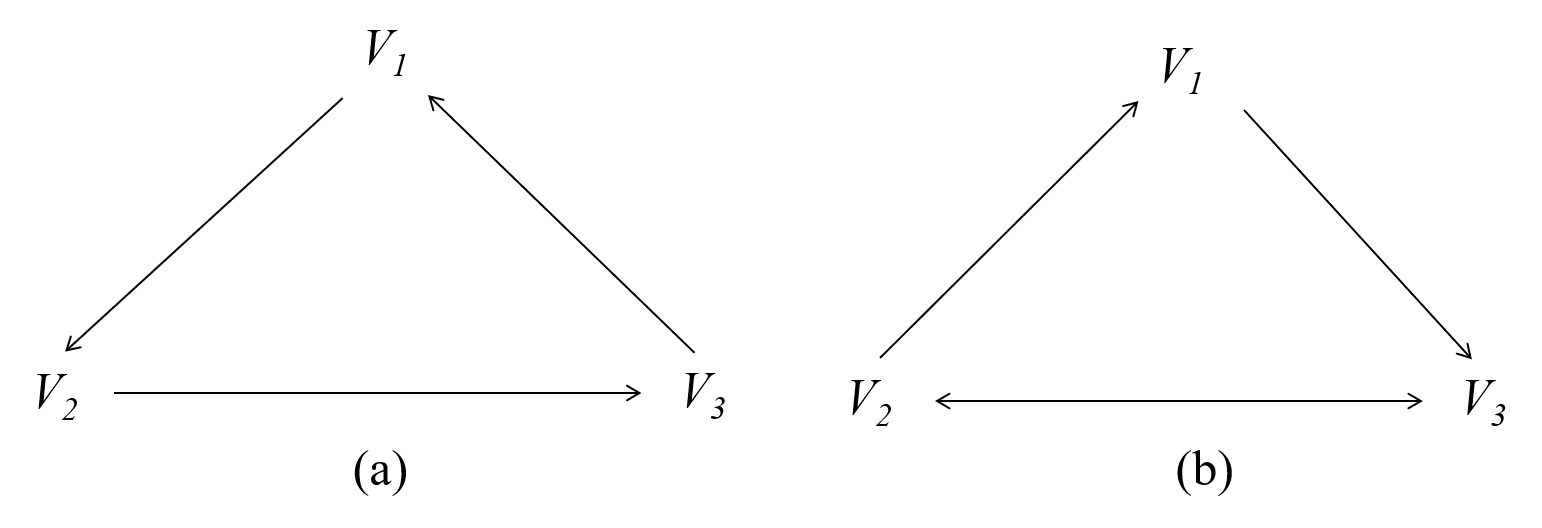}
\caption{(a) An example of a directed cycle, (b) An example of an almost directed cycle.}
\label{Fig:cyce_dag_mag}
\end{figure}

\begin{definition} [Markov condition~\cite{pearl2009causality}]
\label{Markov condition}
Given a DAG $\mathcal{G}=(\mathbf{V}, \mathbf{E})$ and $\textsl{P}(\mathbf{V})$, the joint probability distribution of $\mathbf{V}$, $\mathcal{G}$ satisfies the Markov condition if for $\forall V_i \in \mathbf{V}$, $V_i$ probabilistically independent of all non-descendants of $V_i$, given the parents of $V_i$.
\end{definition}

Based on the Markov condition, according to $\mathcal{G}$  $\textsl{P}(\mathbf{V})$ can be factorized into: $\textsl{P}(\mathbf{V}) = \prod_i p(V_i |Pa(V_i))$.

\begin{definition}[Faithfulness~\cite{spirtes2000causation}]
\label{Faithfulness}
A DAG $\mathcal{G}=(\mathbf{V}, \mathbf{E})$ is faithful to $\textsl{P}(\mathbf{V})$ iff every independence presenting in $\textsl{P}(\mathbf{V})$  is entailed by $\mathcal{G}$ and fulfills the Markov condition. A distribution $\textsl{P}(\mathbf{V})$ is faithful to a DAG $\mathcal{G}$ iff there exists a DAG $\mathcal{G}$ which is faithful to $\textsl{P}(\mathbf{V})$.
\end{definition}

A \emph{causal DAG} is a DAG in which a node's parents are interpreted as its direct causes. To learn a causal DAG from observational data, we also need to assume causal sufficiency.

\begin{definition}[Causal sufficiency~\cite{spirtes2000causation}]
\label{Causalsufficiency}
A dataset satisfies causal sufficiency if for every pair of variables $(V_i, V_j)$ in $\mathbf{V}$, all their common causes are also in $\mathbf{V}$.
\end{definition}

When causal sufficiency is violated, ancestral graphs are used to represent data generating processes. An \emph{ancestral graph} $\mathcal{M}$ is a mixed graph that does not contain directed cycles or almost directed cycles. A graph is called a \emph{mixed graph} if it includes directed and bi-directed edges. When there exists a bi-directed edge $V_{i} \leftrightarrow V_{j}$ in $\mathcal{M}$, there is not directed path $V_{i} \rightarrow V_{j}$ or $V_{i}\leftarrow V_{j}$. An \emph{almost directed cycle} occurs when $V_i\leftrightarrow V_j$ is in $\mathcal{M}$ and $V_j\in An(V_i)$, Figure~\ref{Fig:cyce_dag_mag} (b) is such an example. Let ``$\ast$'' denote any allowed edge marks. For a path $\pi$, a non-ending node $V_{i}$ is a \emph{collider} on $\pi$ if $\pi$ contains $V_{i-1}\astrightarrow V_{i} \astleftarrow V_{i+1}$. A path $\pi$ is a \emph{collider path} if each node excluding the ending nodes on $\pi$ is a collider.
\begin{definition}[m-separation~\cite{richardson2002ancestral}]
 \label{m-separation}
In an ancestral graph $\mathcal{G}$, a path $\pi$ between $V_{i}$ and $V_{j}$ is said to be m-separated by a set of nodes $\mathbf{Z}$ (possibly $\emptyset$) if (1). $\pi$ does not contain any collider which is in $\mathbf{Z}$, or (2). for any collider $V_i$ on the path $\pi$, $V_i \notin \mathbf{Z}$ and no descendant of $V_i$ is in $\mathbf{Z}$.
Two nodes $V_{i}$ and $V_{j}$ are said to be m-connected by $\mathbf{Z}$ in $\mathcal{G}$ if $V_{i}$ and $V_{j}$ are not m-separated by $\mathbf{Z}$.
\end{definition}

If $V_i$ and $V_j$ are m-separated by $\mathbf{Z}$, $V_i \indep V_j |\mathbf{Z}$, the information flow from $V_i$ and $V_j$ is ``blocked'' by $\mathbf{Z}$. In this paper, we call $\mathbf{Z}$ a block set of the ordered pair ($V_i, V_j$).

\begin{definition}[Maximal ancestral graph (MAG)]
\label{MAG}
An ancestral graph $\mathcal{M}=(\mathbf{V}, \mathbf{E})$ is called a maximal ancestral graph if every pair of non-adjacent nodes $V_{i}$, $V_{j}$ in $\mathcal{M}$ can be m-separated by a set $\mathbf{Z}\subseteq \mathbf{V}\backslash \{V_{i}, V_{j}\}$.
\end{definition}

We present an example of causal MAG in Figure~\ref{Fig:causaldiagram} (b). Figure~\ref{Fig:causaldiagram} (a) is its corresponding causal DAG with the hidden variables $U_1$ and $U_2$ shown. In a MAG $\mathcal{M}$, if a path between $V_s$ and $V_e$ is a directed path, it is called a \emph{causal path} from $V_s$ to $V_e$. A non-directed path between $V_s$ and $V_e$ is a \emph{non-causal path}. In Figure~\ref{Fig:causaldiagram}(b), path $W\rightarrow Y$ is a causal path, and other paths between $W$ and $Y$ are non-causal paths.

\begin{figure}
\centering
  \includegraphics[width=3.2in]{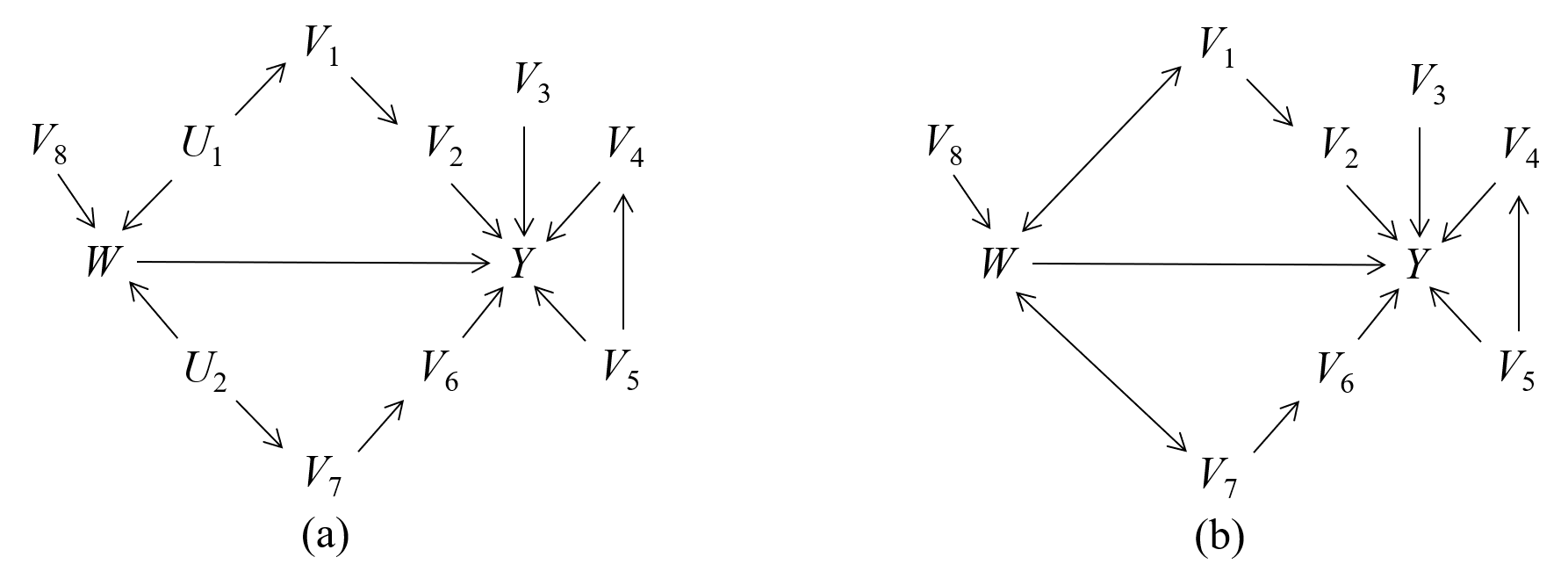}
\caption{(a) A DAG with two hidden variables $U_1$ and $U_2$. (b) The corresponding MAG of the DAG.}
\label{Fig:causaldiagram}
\end{figure}

\begin{definition}[Visibility \cite{zhang2008causal}]
\label{Visibility}
Given a MAG $\mathcal{M}$, a directed edge $V_{i}\rightarrow V_{j}$ is visible if there is a node $V_{k}$ not adjacent to $V_{j}$, such that either there is an edge between $V_{k}$ and $V_{i}$ that is into $V_{i}$, or there is a collider path between $V_{k}$ and $V_{i}$ that is into $V_{i}$ and every node in this path is a parent of $V_{j}$. Otherwise, $V_{i}\rightarrow V_{j}$ is said to be invisible.
\end{definition}

\begin{figure}
\centering
  \includegraphics[width=2.5 in]{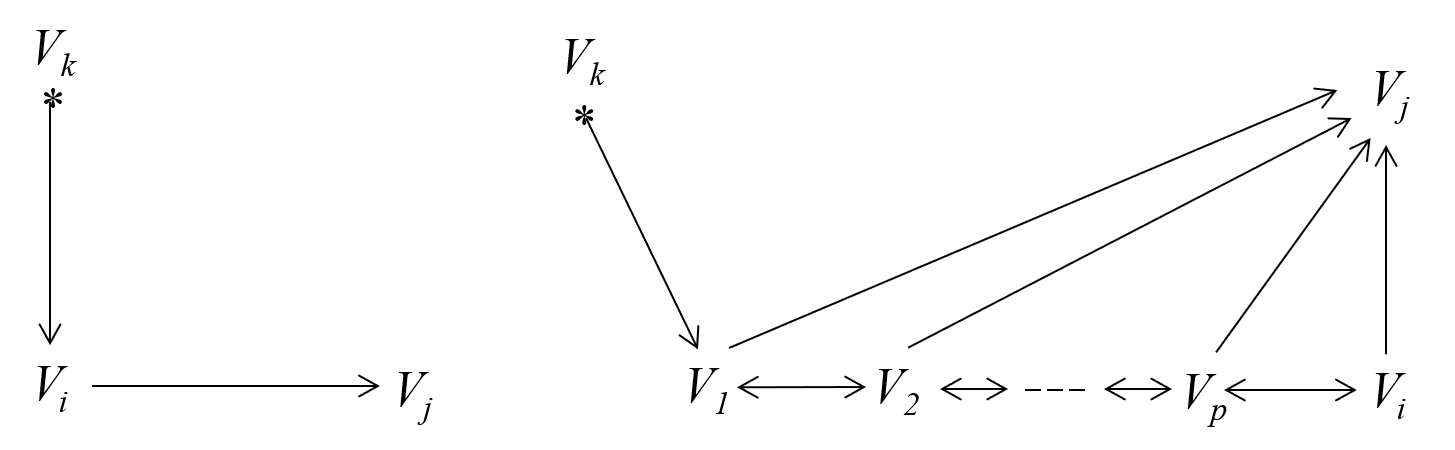}
 % \vspace{-15pt}
\caption{Two possible configurations of the visible edge for $V_i \rightarrow V_j$.}
\label{Fig:visible_edge}
\end{figure}

Figure~\ref{Fig:visible_edge} shows two different graphical configurations where the edge $V_i \rightarrow V_j$ is visible. In a given MAG $\mathcal{M}$, a visible edge $V_{i}\rightarrow V_{j}$ implies that there are no hidden variables between $V_{i}$ and $V_{j}$. Otherwise, the edge $V_{i}\rightarrow V_{j}$ may include hidden variables.

\subsection{Adjustment set for causal effect estimation}
Let $W$ be a binary treatment variable, $Y$ the outcome variable. The causal effect of $W$ on $Y$, $CE(W, Y)$ is the change of $Y$ due to a change of $W$. When we use the \emph{do} operator~\cite{pearl2009causality} to represent a manipulation of a variable (e.g. set $W$ to 1) and denote $W=1$ and $W=0$ as $w, w'$ respectively, $CE(W, Y)$ is defined as:

\begin{equation}\label{eq1}
CE(W, Y) = \mathbb{E}(Y\mid do~(w)) -  \mathbb{E}(Y\mid do~(w'))
\end{equation}

\noindent where $\mathbb{E}$ is the Expectation. $CE(W, Y)$ can be found in an experimental setting where $W$ is physically set as 0 and 1, respectively. Let us assume that an underlying mechanism dictates the causal relationships among variables, and $CE(W,Y)$ is determined by the mechanism.

When estimating the causal effect of W on Y, the effect of other variables needs to be eliminated or adjusted to obtain an unbiased estimation of the causal effect. In graphical causal modelling, the back-door criterion is the main principle for identifying a set of adjustment variables, denoted as $\mathbf{Z}$ in this paper.

\begin{definition}[Back-door criterion in DAG]
A set of variables $\mathbf{Z}$ satisfies the back-door criterion relative to $(W, Y)$ in a DAG $\mathcal{G}$ if (1) $\mathbf{Z}$ does not contain descendants of $W$; (2) $\mathbf{Z}$ blocks every back-door path between $W$ and $Y$ (i.e. the paths with an arrow pointing to $W$).
\end{definition}

Given an adjustment set $\mathbf{Z}$, the causal effect $CE(W, Y)$ can be estimated unbiasedly as follows:
\begin{equation}
\label{eq2}
CE(W, Y) =\sum_{\mathbf{z}}[\mathbb{E}(Y\mid w, \mathbf{Z}=\mathbf{z})- \mathbb{E}(Y\mid w', \mathbf{Z}=\mathbf{z})]p(\mathbf{Z}=\mathbf{z})
\end{equation}

When the assumption of causal sufficiency does not hold, the generalised back-door criterion in a MAG can be used to search for an appropriate adjustment set $\mathbf{Z}$ for estimating $CE(W, Y)$~\cite{maathuis2015generalized}.
\begin{definition}[Back-door path in MAG]
\label{back door path MAG}
For the ordered pair $(W, Y)$ in a MAG, a path from $W$ to $Y$ is a back-door path if it does not have a visible edge out of $W$.
\end{definition}

\begin{definition}[Generalised back-door criterion for MAG]
\label{generalized back door criterion for MAG}
Given a MAG $\mathcal{M}=(\mathbf{V}, \mathbf{E})$, a set $\mathbf{Z}\subseteq \mathbf{V}\setminus \{W, Y\}$ satisfies the generalised back-door criterion w.r.t. ($W, Y$) in $\mathcal{M}$ if (1) $\mathbf{Z}$ does not contain descendants of $W$; (2) $\mathbf{Z}$ blocks every back-door path between $W$ and $Y$.
\end{definition}

If a set $\mathbf{Z}$ in $\mathcal{M}$ satisfies the generalised back-door criterion, relative to $(W, Y)$, then the unbiased estimation of $CE(W, Y)$ can be achieved by Eq.(\ref{eq2}).

In some MAGs, the adjustment sets could not be found due to the causal ambiguities, and this can be judged based on the amenability of a MAG~\cite{van2014constructing} as defined below.
\begin{definition}[Amenable MAG w.r.t. ($W, Y$)]
\label{Amenability}
Let $W$ and $Y$ be two nodes in a MAG $\mathcal{M}=(\mathbf{V}, \mathbf{E})$. The MAG is adjustment amenable w.r.t. $(W, Y)$ if $W\rightarrow Y$ is visible.
\end{definition}

If the MAG $\mathcal{M}$ is not adjustment amenable w.r.t. $(W, Y)$, then no adjustment set $\mathbf{Z}$ in $\mathbf{V}\setminus \{W, Y\}$ can be found~\cite{van2014constructing}.

\section{Local search for an adjustment set}
\label{Sec:Methodology}
\subsection{The Theorem}
\label{Subsec:ACTM03}
Let $\mathbf{D}$ be a dataset containing a binary treatment variable $W$, an outcome variable $Y$ which is binary or numerical, and $\mathbf{X}$, a set of all other variables of any type. We assume that the dataset is generated from an underlying causal MAG M which is adjustment amenable to the ordered pair $(W, Y)$. We also assume that all the variables in $\mathbf{X}$ are measured before applying the treatment and observing $Y$, indicating that variables in $\mathbf{X}$ are all non-descendants of $W$ or $Y$ in $\mathcal{M}$. The pretreatment variable assumption is realistic as it reflects what normally happens in practice, that is, all the other variables are often measured before the treatment is applied and the outcome is observed. This assumption is commonly used in~\cite{de2011covariate,hill2011bayesian,imai2014covariate,vanderweele2011new}.

In the proposed Theorem~\ref{Theorem-MAG-PA-W01} below, $\mathcal{M}_{\underline{W}}$ denotes the manipulated $\mathcal{M}$ from which $W\rightarrow Y$ has been removed, and $\mathcal{M}_{\underline{W}}$ is still a MAG by the closure property of a MAG~\cite{richardson2002ancestral}.

\begin{thm} [Adjustment set discovery via local search]
\label{Theorem-MAG-PA-W01}
Given a causal MAG $\mathcal{M}$ which is adjustment amenable to the ordered pair ($W, Y$), there exists at least an adjustment set $\mathbf{Z}\subseteq \mathbf{X}$ for ($W, Y$) such that $\mathbf{Z}$ is a subset of $Adj(W \cup Y)$, where $Adj(W\cup Y)$ is the shorthand of $Adj(W)\cup Adj(Y$) in $\mathcal{M}_{\underline{W}}$.
\end{thm}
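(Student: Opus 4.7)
The plan is to verify the two conditions of the generalised back-door criterion (Definition 8) for some $\mathbf{Z}\subseteq Adj_{\mathcal{M}_{\underline{W}}}(W)\cup Adj_{\mathcal{M}_{\underline{W}}}(Y)$. Condition (1), that $\mathbf{Z}$ contains no descendants of $W$, comes for free: the pretreatment assumption places every element of $\mathbf{X}$ among the non-descendants of both $W$ and $Y$ in $\mathcal{M}$, so any candidate $\mathbf{Z}\subseteq\mathbf{X}$ automatically passes. All of the work therefore concentrates on choosing $\mathbf{Z}$ so as to block every back-door path between $W$ and $Y$.

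Next I would reformulate the back-door paths inside $\mathcal{M}_{\underline{W}}$. Amenability says $W\to Y$ is visible in $\mathcal{M}$, so $W\to Y$ is the \emph{unique} edge out of $W$ whose start would disqualify a path from being a back-door path; since this is the only edge deleted in passing from $\mathcal{M}$ to $\mathcal{M}_{\underline{W}}$, the back-door paths between $W$ and $Y$ in $\mathcal{M}$ coincide with the paths between $W$ and $Y$ in $\mathcal{M}_{\underline{W}}$ that begin with an arrowhead at $W$. Writing such a path as $\pi=\langle W,V_{1},V_{2},\dots,V_{k-1},Y\rangle$, we have $V_{1}\in Adj_{\mathcal{M}_{\underline{W}}}(W)$ and $V_{k-1}\in Adj_{\mathcal{M}_{\underline{W}}}(Y)$, both pretreatment, so each back-door path already offers two ``local'' candidate blockers inside the set $Adj(W\cup Y)$.

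My proposed construction is to take $\mathbf{Z}$ as the subset of $Adj_{\mathcal{M}_{\underline{W}}}(W)\cup Adj_{\mathcal{M}_{\underline{W}}}(Y)$ that acts as non-collider blockers on every back-door path through them, essentially the parents of $W$ and $Y$ together with those spouses whose outgoing mark on each back-door path is a tail. On each back-door path $\pi$, I check whether its $W$-adjacent node $V_{1}$ is a non-collider on $\pi$: if yes, conditioning on $V_{1}$ m-separates $W$ from $Y$ along $\pi$; if no, the blocking duty is transferred to the $Y$-adjacent node $V_{k-1}$, which is symmetrically available in $Adj_{\mathcal{M}_{\underline{W}}}(Y)$. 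A sweep over all back-door paths in increasing order of length, combined with the pretreatment fact that no descendant of a collider can sneak into $\mathbf{Z}$, then yields a single consistent $\mathbf{Z}\subseteq Adj(W\cup Y)$ satisfying both conditions of the generalised back-door criterion.

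The main obstacle I anticipate is the collider case at the $W$-end: a configuration $W\leftrightarrow V_{1}$ with $V_{1}$ a collider on $\pi$ means that including $V_{1}$ \emph{opens} $\pi$ rather than blocking it, while excluding $V_{1}$ leaves $\pi$ active unless some later node intervenes. The proof must show that in every such situation the opposite endpoint $V_{k-1}\in Adj_{\mathcal{M}_{\underline{W}}}(Y)$ is always available as a non-collider blocker, and that making these local choices globally does not reopen any previously blocked path. Carrying this symmetry through uniformly will rely on the no-almost-directed-cycle condition and the maximality of $\mathcal{M}$ — so that m-separation interacts cleanly with the local neighbourhood — and on the visibility of $W\to Y$ guaranteed by amenability, which is the structural ingredient that keeps the endpoint trade-off well defined throughout the argument.
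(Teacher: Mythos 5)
Your high-level plan (check the two conditions of the generalised back-door criterion, with condition (1) immediate from the pretreatment assumption) is sound and consistent with the paper, but the construction you sketch for condition (2) rests on a step that fails. You claim that whenever the $W$-adjacent node $V_1$ of a back-door path $\pi$ is a collider, ``the opposite endpoint $V_{k-1}$ is always available as a non-collider blocker.'' This is false: a back-door path of the form $W\leftrightarrow C\leftrightarrow Y$, or more generally $W\leftrightarrow C_1\leftrightarrow\cdots\leftrightarrow C_k\leftrightarrow Y$, contains no non-collider at all, so it cannot be blocked by conditioning on anything; it is blocked only by \emph{keeping out} of $\mathbf{Z}$ every collider on it together with all their descendants. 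Your ``sweep'' never addresses this, and the parenthetical claim that ``no descendant of a collider can sneak into $\mathbf{Z}$'' is exactly the nontrivial assertion that needs proof — it does not follow from the pretreatment assumption (a descendant of a collider on a back-door path can in general be a parent of $W$ or of $Y$, hence a member of your candidate set); what rules it out is the ancestral-graph structure, which you never invoke. A smaller slip: the fact that every back-door path leaves $W$ with an arrowhead comes from pretreatment (which forbids $W$ having any child in $\mathbf{X}$), not from amenability alone.

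The paper's proof dispenses with the path-by-path endpoint trade-off by exhibiting a single concrete set, $Pa(Y)\setminus\{W\}$, and showing it blocks every back-door path $\pi$: if the edge at the $Y$-end of $\pi$ is $X\to Y$, then $X\in Pa(Y)\setminus\{W\}$ is a non-collider on $\pi$ and blocks it; otherwise that edge is $X\leftrightarrow Y$, so $X$ is an ancestor of neither $Y$ nor $W$ (the latter because $W\to Y$), the collider $C$ on $\pi$ nearest to $X$ is reached from $X$ by a directed subpath and is hence a descendant of $X$, and therefore $C$ can have neither itself nor any descendant in $Pa(Y)$ — otherwise $X$ would be an ancestor of $Y$, contradicting the absence of (almost) directed cycles. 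That last implication is precisely the ingredient your proposal is missing. To repair your argument, drop the spouses and the parents of $W$ from the construction and prove directly that $Pa(Y)\setminus\{W\}$, which is a subset of $Adj(W\cup Y)$, already satisfies the generalised back-door criterion.
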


\begin{proof}
The requirement that $\mathcal{M}$ is adjustment amenable relative to the ordered pair $(W, Y)$ is to ensure that the edge $W\rightarrow Y$ is visible and the causal effect of $W$ on $Y$ is identifiability. There are two types of paths between $W$ and $Y$, causal paths and back-door paths. In our problem setting, pretreatment variables $\mathbf{X}$ are non-descendants of $W$ and $Y$, and hence $\mathcal{M}_{\underline{W}}$ does not contain any causal path between $W$ and $Y$.

To prove that an adjustment set $\mathbf{Z}$ in the MAG $\mathcal{M}$ is a subset of $Adj(W\cup Y)$, we will show that the set $Pa(Y)\setminus\{W\}$ blocks all back-door paths from $W$ to $Y$. Let $\pi$ be any back-door path between $W$ and $Y$, and let $X$ denote the node adjacent to $Y$ on $\pi$. Note that $X$ is not $W$, for the assumption, $W\rightarrow Y$ is visible and so is not a back-door path. There are two cases as follows.

\textbf{1}. The edge between $X$ and $Y$ is $X\rightarrow Y$. In this case, $X$ is a non-collider on $\pi$ and is in the set $Pa(Y)\setminus\{W\}$, and so $\pi$ is blocked by $Pa(Y)\setminus\{W\}$.

\textbf{2}. The edge between $X$ and $Y$ is not $X\rightarrow Y$. Then it must be bi-directed edge: $X\rightarrow Y$, for otherwise $X$ would be a descendant of $Y$ and hence a descendant of $W$, contradicting the assumption. This means that $X$ is not an ancestor of $Y$ and hence also not an ancestor of $W$ in the MAG $\mathcal{M}$. Then on $\pi$, there must be a collider. Let $C$ be the collider closest to $X$ on $\pi$. Then $C$ must be a descendant of $X$ (possibly $X$ itself), which means that $C$ does not have a descendant in $Pa(Y)\setminus \{W\}$, hence $\pi$ is blocked by $Pa(Y)\setminus \{W\}$.
 
Therefore, given a MAG $\mathcal{M}$, $Pa(Y)\setminus \{W\}$, which is a subset of $Adj(W\cup Y)$, is an adjustment set for the ordered pair $(W, Y)$.
\end{proof}

\subsection{Causal effect estimation}
Based on the above Theorem developed, we can firstly employ a local causal structure discovery algorithm to obtain $Adj(W\cup Y)$ from data, then our answer to a causal query is an Adjustment Set-Causal Effect Table, or ASCET, where each row contains a candidate adjustment set (a subset of $Adj(W\cup Y)$) and the causal effect estimated corresponding to the candidate adjustment set.

\textbf{Example}. Suppose $Adj(W\cup Y)=\{X_1, X_2, X_3\}$, Table~\ref{tab::exampleCE01} shows an example ASCET where 1 denotes a variable is in a candidate adjustment set and 0 otherwise.

\subsection{Removing insignificant adjustment variables}
\label{subsec::insignificantvariables}
We reduce the size of ASCET by removing variables that do not significantly affect causal effect estimation. To test if a variable significantly affects causal effect estimation, we compare the cause effect estimated by including the variable with the cause effect estimated by excluding the variable in all cases. If the average difference is smaller than a threshold, the variable is insignificant. More precisely, we have the following definition.

\begin{minipage}{\textwidth}
\begin{minipage}[t]{0.25\textwidth}
\begin{center}
\small
 \makeatletter\def\@captype{table}\makeatother\caption{An example ASCET.}
%\vspace{-5pt}
\begin{tabular}{|ccc|c|}
\hline
$X_1$ & $X_2$ & $X_3$ &  $CE$ \\ \hline
0 & 0 & 0 & 0.4    \\
1 & 0 & 0 & 0.3     \\
0 & 1 & 0 & 0.4     \\
0 & 0 & 1 & 0.5  \\
1 & 1 & 0 & 0.4   \\
1 & 0 & 1 & 0.1     \\
0 & 1 & 1 & 0.5  \\
1 & 1 & 1 & 0.1  \\
\hline
\end{tabular}
\label{tab::exampleCE01}
\end{center}
\end{minipage}
\begin{minipage}[t]{0.21\textwidth}
\centering
\small
\makeatletter\def\@captype{table}\makeatother\caption{The final ASCET.}
%\vspace{-10pt}
\begin{tabular}{|cc|c|}
\hline
$X_1$ & $X_3$ &  $CE$ \\ \hline
0  & 0 & 0.4    \\
1  & 0 & 0.3     \\
0  & 1 & 0.5  \\
1  & 1 & 0.1     \\
\hline
\end{tabular}
\label{tab::exampleCE02}
\end{minipage}
\end{minipage}

\begin{definition}
\label{Sensitivity}
For each $X\in Adj(W\cup Y)$, let $\mathbf{Z}\subseteq Adj(W\cup Y)$ be a candidate adjustment set containing $X$ and $\mathbf{Z'}= \mathbf{Z}\setminus \{X\}$, the sensitivity of $X$ is defined as:
\begin{equation}
\label{eq3}
Sen(X)=\frac{1}{\rho} \sum_{\mathbf{Z}} |CE_{\mathbf{Z}}-CE_{\mathbf{Z'}}|
\end{equation}
\noindent where $\rho$ is the size of the power set of $Adj(W\cup Y)\setminus\{X\}$, $CE_{\mathbf{Z}}$ and $CE_{\mathbf{Z'}}$ are average causal effects of $W$ on $Y$ with the adjustment sets $\mathbf{Z}$ and $\mathbf{Z'}$ respectively.
\end{definition}

The sensitivity of variable $X$ reflects its impact on the estimated causal effect of $W$ on $Y$. When $X$ has a low sensitivity, no significant error will be introduced if we exclude it from $Adj(W\cup Y)$. So, $X$ is removed from $Adj(W\cup Y)$, and the size of ASCET is reduced.

\textbf{Example}. We use the example in Table~\ref{tab::exampleCE01} to illustrate how we reduce the size of ASCET. We use $X_1$ as an example, $\rho = 4$ as the size of the power set of $\{X_2, X_3\}$ is 4, then $Sen(X_1)= \frac{1}{4} (|CE_{\{X_1\}}-CE_{\phi}|+|CE_{\{X_1, X_2\}} - CE_{\{X_2\}}|+|CE_{\{X_1, X_3\}}-CE_{\{X_3\}}|+|CE_{\{X_1,X_2, X_3\}} - CE_{\{X_2, X_3\}}|) = \frac{1}{4}(0.1+0+0.4+0.4)=0.25$. In the same way, $Sen(X_2)=0.025$ and $Sen(X_3)=0.175$. Hence, $X_2$ has a low sensitivity such that we can exclude it from $Adj(W\cup Y)$ and the corresponding rows in ASCET can be removed. The final result is presented as Table~\ref{tab::exampleCE02}.

\subsection{The proposed DICE algorithm}
\label{Subsec:ACTM04}
Algorithm~\ref{pseudocode01} presents our proposed algorithm for Data-drIven Causal Effect estimation without causal sufficiency (DICE). DICE contains two parts. Part 1 (lines 1 to 13) is for finding $Adj(W\cup Y)$. Firstly, a local causal structure learning algorithm (such as PC-Select~\cite{buhlmann2010variable} or HITON-PC~\cite{aliferis2010local}) is used to find the parents of $W$ and $Y$. In this work, we use PC-Select, given its high accuracy, PC-Select is a local version of PC~\cite{spirtes2000causation}, an algorithm for learning a global causal structure. Then we check whether $Adj(W) = Adj(W)\setminus \{Y\}$ is empty to determine if the causal effect of $W$ on $Y$ is identifiable or not from data. If $Adj(W)$ is empty, no adjustment set can be found, and DICE returns an empty ASCET and terminates.

Part 2 of Algorithm~\ref{pseudocode01} (lines 7 to 16) firstly estimates each possible causal effect when adjusting each subset of $Adj(W \cup Y)$. We use Propensity Score Matching (PSM)~\cite{rosenbaum1983central} to estimate causal effects where propensity score is calculated by \emph{glm} and matching is performed by \emph{Match} in the $\mathbb{R}$ packages \emph{stats} and \emph{Matching}~\cite{sekhon2008multivariate}, respectively. The adjustment sets and the corresponding causal effects estimated are added to the ASCET. Next, DICE calculates the sensitivity of each variable in $Adj(W\cup Y)$, and if a variable's sensitivity is below the given threshold, DICE removes all the adjustment sets containing the variable and the corresponding causal effects from the ASCET.

\textbf{Time-complexity analysis.} PC-Select and PSM contribute the most to the time complexity of DICE. In the worst case, the complexity of PC-Select is $\mathbf{O}(p2^{p-1}n)$~\cite{buhlmann2010variable}, where $p$ is the number of variables and $n$ the number of samples. In our problem, the worst case is when all variables of $\mathbf{X}$ are causes of $W$ and/or $Y$, which, however rarely occurs. In most cases, PC-Select can handle thousands of variables~\cite{buhlmann2010variable}. The time complexity of PSM is $\mathbf{O}(p^{2})$. Our experiment in Section~\ref{sec:experiments} shows that DICE scales with both $p$ and $n$ well.

\begin{algorithm}[tp]
\caption{Data-drIven Causal Effect estimation without causal sufficiency (DICE)}
\label{pseudocode01}
%\begin{flushleft}
\noindent {\textbf{Input}}: Dataset $\mathbf{D}$ with treatment $W$, pretreatment variables $\mathbf{X}$, outcome $Y$ and a sensitivity threshold $\tau$.\\
\noindent {\textbf{Output}: The ASCET.}
\begin{algorithmic}[1]
\STATE {Call a local causal structure learning algorithm to find $Adj(W)\setminus \{Y\}$ and $Adj(Y)\setminus \{W\}$ from $\mathbf{D}$, respectively.}
\STATE {$ASCET\leftarrow \emptyset$}
\IF {$Adj(W)$ is empty}
\STATE{Adjustment set could not be found.}
\RETURN{$ASCET$.}
\ELSE
\FOR {each subset $\mathbf{Z} \subseteq Adj(W \cup Y)$}
\STATE {calculate CE($W, Y$) by adjusting $\mathbf{Z}$.}
\STATE{add pair $(\mathbf{Z}, CE(W, Y))$ to ASCET.}
\ENDFOR
\STATE{For each $X\in Adj(W \cup Y)$ calculate $Sen(X)$.}
\IF{$Sen(X)<\tau$}
\STATE{remove from ASCET all the rows containing $X$.}
\ENDIF
\RETURN{ASCET.}
\ENDIF
\end{algorithmic}
\end{algorithm}

\section{Experiments}
\label{sec:experiments}
\subsection{The quality of local structure learning}
\label{subsec::Performance}
Before evaluating DICE, we assess the quality of local structure learning algorithm, i.e. PC-Select used in our paper versus global structure learning algorithms. The synthetic datasets are generated according to the underlying causal DAG in Fig~\ref{Fig:causaldiagram} (a) with 8 pretreatment variables, 2 hidden variables, $W$ and $Y$ by following the procedure in~\cite{haggstrom2018data}. The underlying causal MAG is presented in Fig~\ref{Fig:causaldiagram} (b), i.e. $U_1$ and $U_2$ are removed from these generated datasets.

\begin{figure}[t]
\centering
\begin{minipage}[t]{0.5\linewidth}
\includegraphics[width=1.7in, height=1.0in]{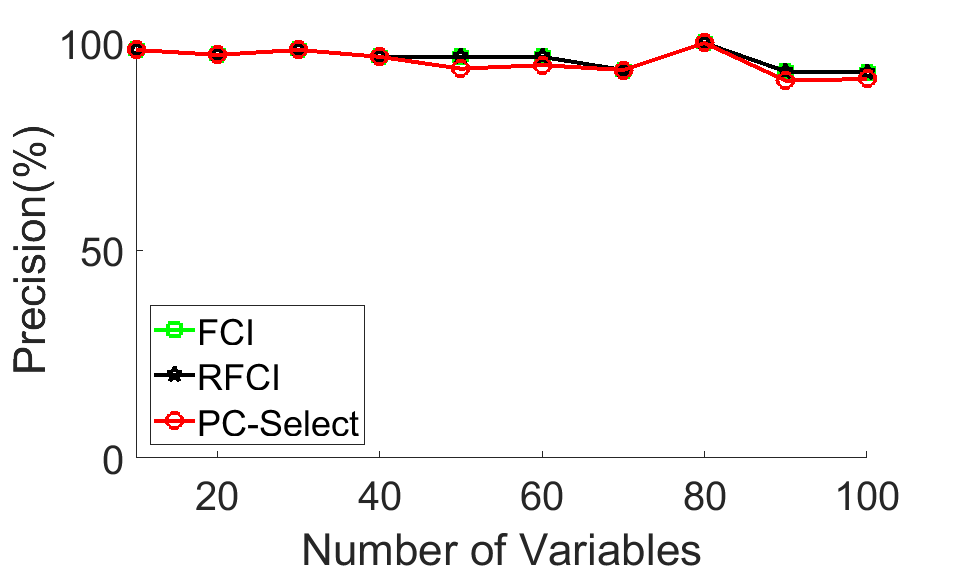}
\end{minipage}%
\hfil
\begin{minipage}[t]{0.5\linewidth}
\includegraphics[width=1.7in, height=1.0in]{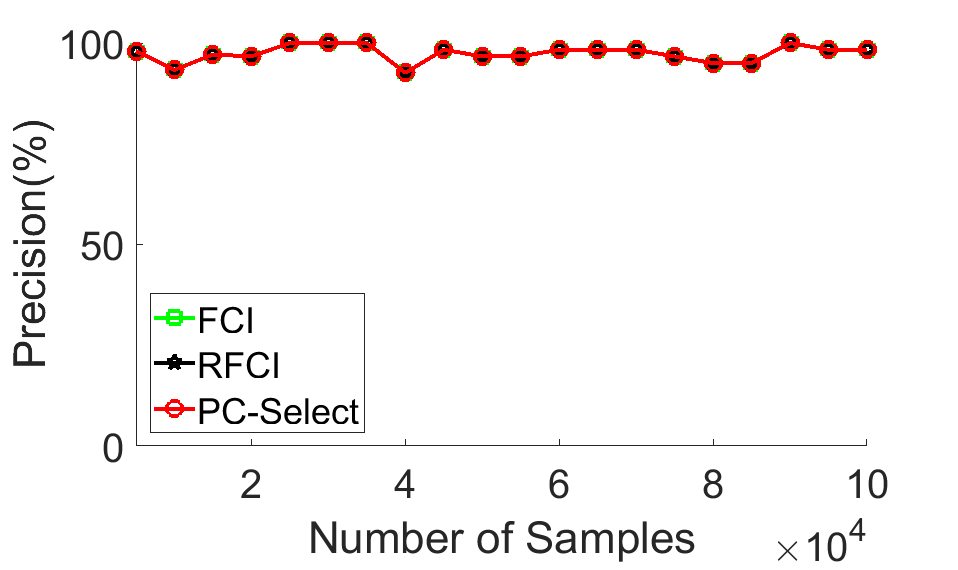}
\end{minipage}%
\vfil
\begin{minipage}[t]{0.5\linewidth}
\includegraphics[width=1.7in, height=1.0in]{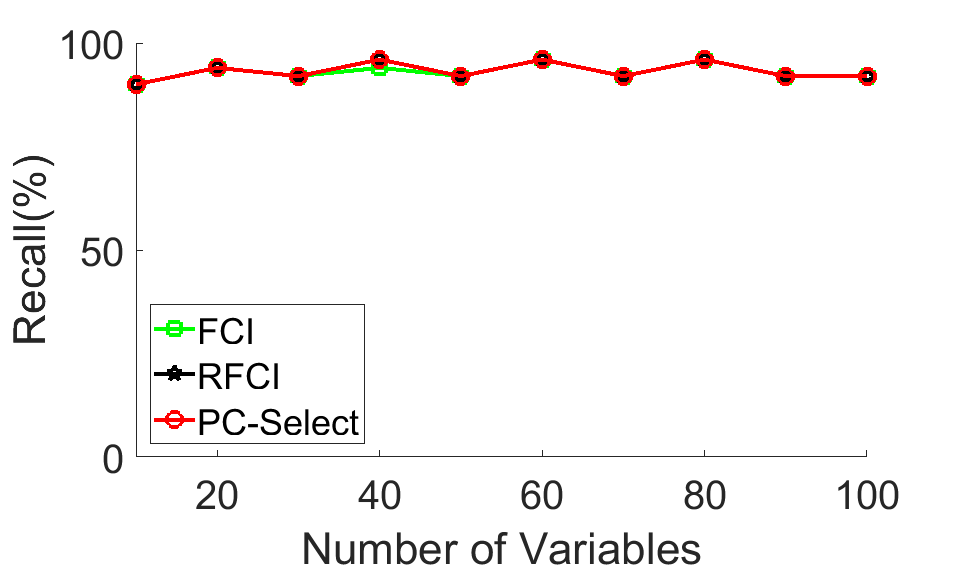}
\end{minipage}%
\hfil
\begin{minipage}[t]{0.5\linewidth}
\includegraphics[width=1.7in, height=1.0in]{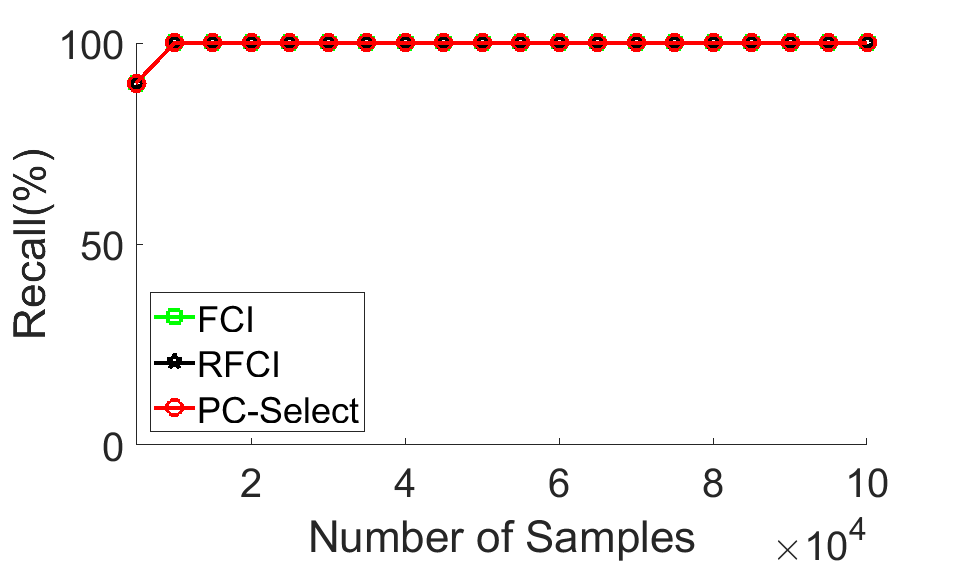}
\end{minipage}%
\vfil
\begin{minipage}[t]{0.5\linewidth}
\includegraphics[width=1.7in, height=1.0in]{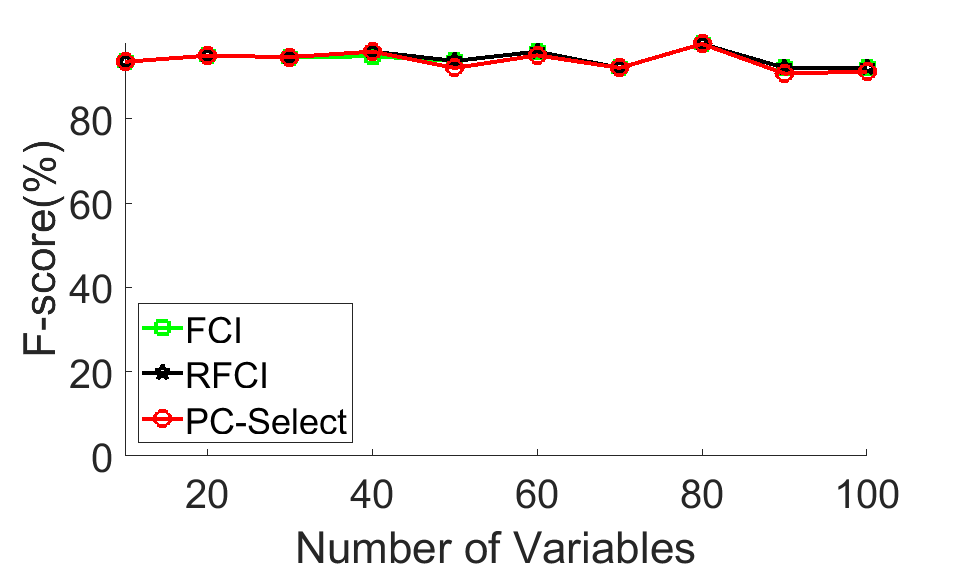}
\end{minipage}%
\hfil
\begin{minipage}[t]{0.5\linewidth}
\includegraphics[width=1.7in, height=1.0in]{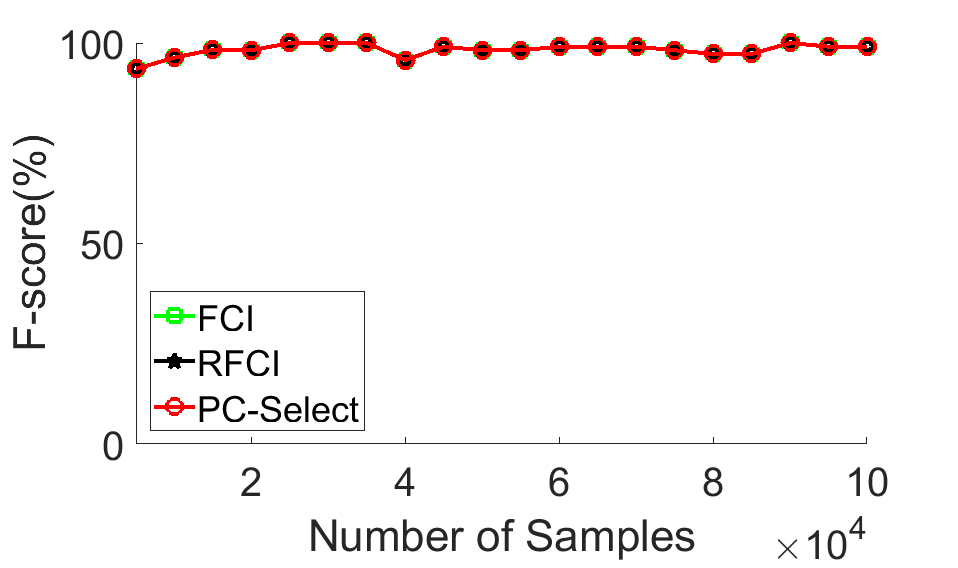}
\end{minipage}
\caption{The Precision, Recall and F-score of FCI \& RFCI \& PC-Select algorithms on two groups of synthetic datasets w.r.t. the number of variables and samples. The qualities of parent discovery by the three algorithms are very consistent. but the time efficiencies of the algorithms are quite different as shown in Figure~\ref{fig:resultsscaleonsyntheticdataset}}
\label{FIGURE_PRECISION}
\end{figure}

We generate two groups of datasets to evaluate the impact of sample size and the number of variables on the quality of the local structure learning algorithm, respectively. The first group of datasets include 10 datasets , each containing 5K samples for 10 variables generated from Fig~\ref{Fig:causaldiagram}(b), plus 0, 10, 20, $\dots$, 90 random variables, respectively. The second group of datasets include 20 datasets with 10 fixed variables of the MAG, but with varying sample sizes of 5K, 10K, $\dots$, 95K and 100K. These datasets with hidden variables are used to evaluate the quality of the local structure learning algorithm.

We use two global structure learning algorithms, Fast causal inference (FCI)~\cite{spirtes2000causation} and Really FCI (RFCI)~\cite{colombo2012learning} to learn from each of the generated datasets a PAG (Partial Ancestral Graph) which represents a Markov equivalence class of MAGs encoding the same dependence/independence relations in the data. Then we extract $Adj(W)$ and $Adj(Y)$ from the learned PAG. For PC-Select, we apply it twice to a dataset, one having $W$ as the target and the other having $Y$ as the target to learn $Adj(W)$ and $Adj(Y)$. The implementations of FCI, RFCI and PC-Select are from the $\mathbb{R}$ package \emph{pcalg}~\cite{kalisch2012causal}, with the default parameter settings and a significance level ($\alpha$) of 0.05.

\textbf{Results.}
We perform the experiments 10 times on each of the 30 datasets. We draw the mean results (precision, recall and F-score) of the three algorithms in Figure \ref{FIGURE_PRECISION}. The local structure learning algorithm PC-Selects achieves similar performance as the global structure learning algorithm in $Adj(W)$ and $Adj(Y)$, indicating that adjustment set discovery in data by local search is reliable.

\subsection{Evaluating DICE with real-world datasets}
\label{subsec::realworld}
We evaluate the effectiveness of DICE on three real-world datasets: Jobs training (Jobs for short)~\cite{lalonde1986evaluating}, IHDP~\cite{hill2011bayesian} and Twins~\cite{almond2005costs}. A brief description of the datasets is shown in Table~\ref{tab:detailsreal}.

\begin{table}
\centering
\caption{Summary of the real-world datasets.}
\label{tab:detailsreal}
\small
\begin{tabular}{cccccc}
\hline
Name & \#treated & \#control & \#samples & \#variables   \\
\hline
Jobs & 297 & 2915 & 3214 & 9 \\
IHDP & 139 & 608 & 747 & 24 \\
Twins & 3275 & 1546 & 4821 & 40 \\
\hline
\end{tabular}
\end{table}

\begin{table}
\caption{Results on Jobs. Bias (\%) is the relative error comparing to the ground-truth causal effect. The estimation with the lowest bias is highlighted in each category.}
\label{tab:Jobs}
\small
\begin{tabular}{llll}
  \hline
 Method & ATT & Bias (\%)  & Remarks \\
 \hline
LV-IDA  & -6645.1 & 827.4\%  &  The best of 4 estimates\\
DICE    & \textbf{834.7}  & \textbf{5.8}\% &  The best of 128 estimates \\
\hline
DICE    & 1745.5   & 97.0\%   &  The most probable estimate \\
PSM    & -947.6   & 201.0\% &   \\
CBPS    & 423.3    & 52.0\%    & \\
CFR     & \textbf{742.0}    & \textbf{16.0\%}      & \\
LASSO   & -475.3   & 153.6\% & \\
BART    & -245.2   & 127.7\% & \\
TMLE    & -1901.4  & 314.6\% & \\
CF      & -4438.4  & 600.9\%   &  The best of 30 forests\\
\hline
\end{tabular}
\end{table}

\begin{table}
\caption{Results on IHDP w.r.t. CE and Bias.}
\label{tab:ihdp}
\small
\begin{tabular}{llll}
  \hline
 Method & CE & Bias(\%) & Remarks \\
 \hline
 LV-IDA  & 3.97 & 9.2\% & The best of 300 estimates\\
 DICE & \textbf{4.41} & \textbf{0.7}\% & The best of 16 estimates \\
\hline
 DICE & \textbf{4.48} & \textbf{2.5}\% & The most probable estimate\\
 PSM  &  4.95 & 13.2\% & \\
 CBPS & 4.56 &  4.3\% & \\
 CFR  & 4.86 &   11.0\%  &\\
 LASSO &  5.06   & 15.7\%  & \\
 BART  & 4.69 & 7.3\%  & \\
 TMLE & 4.86  & 11.2\%  & \\
 CF   &  4.25 & 2.9\%   & The best of 30 forests \\
\hline
\end{tabular}
\end{table}

Jobs consists of the original LaLonde dataset (297 treated samples and 425 control samples)~\cite{lalonde1986evaluating} and the Panel Study of Income Dynamics (PSID) observational group (2490 control samples)~\cite{imai2014covariate}. Each sample has 9 variables, including 7 pretreatment variables (which are age in years, schooling in years, indicators for black and Hispanic, marriage status, school degree, previous earnings in 1974 and 1975, and whether the 1974 earnings variable is missing), employment status with/without job training as treatment variable, and 1978 earning as the outcome variable. Because the dataset contains records of people taking part in the training only, as in~\cite{lalonde1986evaluating}, we estimate Average Treatment Effect on Treated (ATT) for DICE and all comparisons, against the ground truth ATT which is \$886~\cite{imai2014covariate}.

IHDP is related to the Infant Health and Development Program (IHDP) on low-birth-weight premature infants~\cite{hill2011bayesian}. IHDP has 24 pretreatment variables and we follow the method in~\cite{hill2011bayesian} to generate the simulated outcome with the ground-truth of 4.38 for the causal effect (CE) by the $\mathbb{R}$ package \emph{npci}\footnote{https://github.com/vdorie/npci}.

\begin{table}
\caption{Results on Twins w.r.t. CE and Bias..}
\label{tab:twins}
\small
\begin{tabular}{llll}
  \hline
 Method & CE & Bias (\%)  & Remarks \\
\hline
 LV-IDA  & -0.007 & 72\%  & The best of 36 estimates\\
 DICE  & \textbf{-0.027}& \textbf{8.0}\% & The best of 64 estimates \\
 \hline
 DICE  & -0.039 & 59.2\% & The most probable estimate \\
 PSM  & -0.018  &  56.7\% &  \\
 CBPS & -0.010 &  59.4\% &  \\
 CFR  & -0.011 & 56.3\% & \\
 LASSO  & -0.016  & 36.2\% & \\
 BART & -0.010  & 41.8\% & \\
 TMLE & -0.016 & 32.9\% & \\
 CF   & \textbf{-0.017} &  \textbf{32.6\%} & The best of 30 forests \\
\hline
\end{tabular}
\end{table}

The Twins dataset consists of samples from twin births in the USA between 1989 and 1991 with the birth weight less than 2000g~\cite{almond2005costs}. We eliminate records with missing values and have 4821 twin pairs left. The treatment variable is birth weight: W=1 for a baby who is heavier in a twin; W=0 otherwise. The mortality after one year is the true outcome for each twin, and the ground-truth causal effect is -0.025. To simulate an observational study, according to~\cite{louizos2017causal}, we use the following Bernoulli distribution to randomly select one of the two twins as the observation and hide the other: $W_i|x_i\sim Bern(sigmoid(\beta^{T}\mathbf{x}+\varepsilon))$, where $x_i$ denotes the pre-treatment variables of the sample $i$, $\mathbf{x}$ denotes the samples set, and $\beta^{T}\sim\mathcal{U}((-0.1,0.1)^{40\times1})$ and $\varepsilon\sim\mathcal{N}(0,0.1)$.

\begin{figure*}[t]
\centering
\includegraphics[width=5in, height=2.35in]{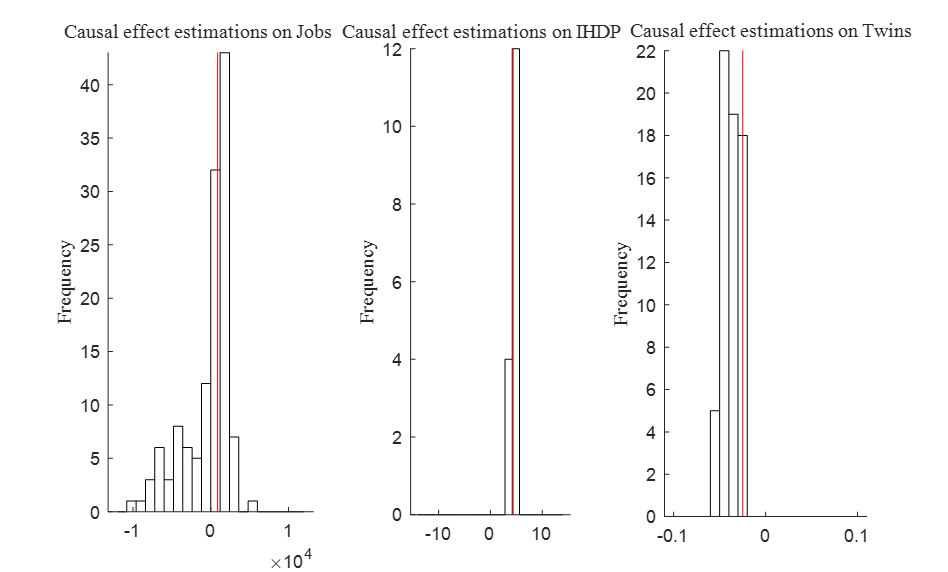}
\caption{Histograms of causal effects in ASCETs for Jobs, IHDP and Twins. The red line denotes the ground truth and the bin size is $(\max(Y)/100)$.}
\label{fig:histplot}
\end{figure*}

There are two major objectives for the experiments.

Firstly, we will show that the true adjustment set can be found as shown in Theorem~\ref{Theorem-MAG-PA-W01}, and therefore the smallest bias of DICE will be very small. We compare DICE with a data-driven causal effect estimation method, LV-IDA~\cite{malinsky2016estimating}\footnote{https://github.com/dmalinsk/lv-ida} which uses FCI to learn a PAG and searches for an adjustment set in each MAG enumerated from the PAG. LV-IDA returns a set of causal effects, one for the adjustment set discovered from each MAG. In the experiment, we report the best estimates for both methods to show the quality of adjustment sets found.

Secondly, we will show that a causal effect estimated by DICE is comparative with the best estimates by methods using domain knowledge. Some statistical and machine learning methods are available for causal effect estimation, with the assumptions of known covariate set and unfoundedness. The typical methods include \textbf{PSM}, propensity score matching with logistic regression~\cite{rosenbaum1983central}; \textbf{CBPS}, covariate balancing propensity score~\cite{imai2014covariate}; \textbf{CFRNET}, a deep learning framework for counterfactual regression with a theoretical error bound (named as CFR)~\cite{shalit2017estimating}; \textbf{LASSO}, Linear regression with the regularization $\ell_{1}$-norm to predict the factual outcome~\cite{tibshirani1996regression}; \textbf{BART}, Bayesian additive regression tree~\cite{chipman2010bart}, a non-linear model which has been applied for counterfactual inference~\cite{hill2011bayesian}; \textbf{TMLE}, targeted maximum likelihood estimation, a doubly robust method~\cite{van2006targeted}; \textbf{CausalForest}, Random forest regression to estimate causal effect~\cite{wager2018estimation} (\textbf{CF} for short). In the experiments, we compare DICE with all the above mentioned methods.

The three datasets will favour the above methods since the unconfoundedness assumption is satisfied. Since the datasets were obtained from well designed observational studies and the covariates were chosen by domain experts.

DICE returns ASCET, the set of adjustment set and causal effect pairs. When reporting causal effects estimated by DICE, we use the most probable value in the ASCET after removing insignificant variables with the sensitivity threshold of $\tau = 0.1$. We group the causal effect in the ASCET by using the bin size of $(\max(Y)/100)$. The average causal effect in the most frequent bin is used as an estimated causal effect.

The parameter setting of DICE is that the \emph{Match} function of estimate is set to ``ATT'' for Jobs and ``ATE'' for IHDP and Twins as in the prior work in~\cite{hill2011bayesian,shalit2017estimating}. Other parameters are set as the default. For CF, the parameter \emph{num.trees} is set from 10 to 300 with the increment of 10, and the best result is reported. For PSM and CBPS, all variables are included in the adjustment variable set.

The experimental results are shown in Tables~\ref{tab:Jobs}, \ref{tab:ihdp}, \ref{tab:twins}, and Figure~\ref{fig:histplot}. From the results, we have the following conclusions.

Firstly, DICE finds the correct adjustment set through local search. Comparing with the other methods, the lowest biases of DICE are consistently the smallest in the three datasets, and are significantly smaller than the lowest biases of LV-IDA, which searchers adjustment sets by global search.

Secondly, causal effects estimated by DICE are comparative with those by other methods which make use of domain knowledge. For the IHDP dataset, DICE achieves the smallest bias. For the Jobs dataset, DICE is ranked the 3rd based on smallest biases. For the Twin dataset, the largest absolute bias is only 0.015, and hence we consider that the estimates by all methods are similar and good.

To show that those comparison methods do not work in a dataset when the unconfoundedness assumption is not satisfied, we have generated datasets with $M$-bias~\cite{pearl2009myth}. The performance of some methods deteriorates greatly. Implementations of other methods do not work with datasets with more than 200K records. We do not show the results in this paper due to the space limitation.

\subsection{Efficiency evaluation of DICE}
\label{subsec::scale}
As PC-Select is a major contributor to the complexity of DICE, in this section, we firstly evaluate the efficiency of PC-Select, and then the overall efficiency of DICE. The computations were performed on a PC with 2.6 GHz Intel Core i7 and 16GB of memory.\\
\textbf{Efficiency evaluation of the structure learning algorithms.}. We record the running time of the structure learning algorithms on the synthetic datasets introduced in Section 4.1 and draw the mean running time in Figure~\ref{fig:resultsscaleonsyntheticdataset}. PC-Select is faster than FCI and RFCI, and comparing to FCI and RFCI, PC-Select scales with the number of variables and samples very well.

\begin{figure}[ht]
\centering
\begin{minipage}[t]{0.5\linewidth}
\includegraphics[width=1.7in,height=1.0in]{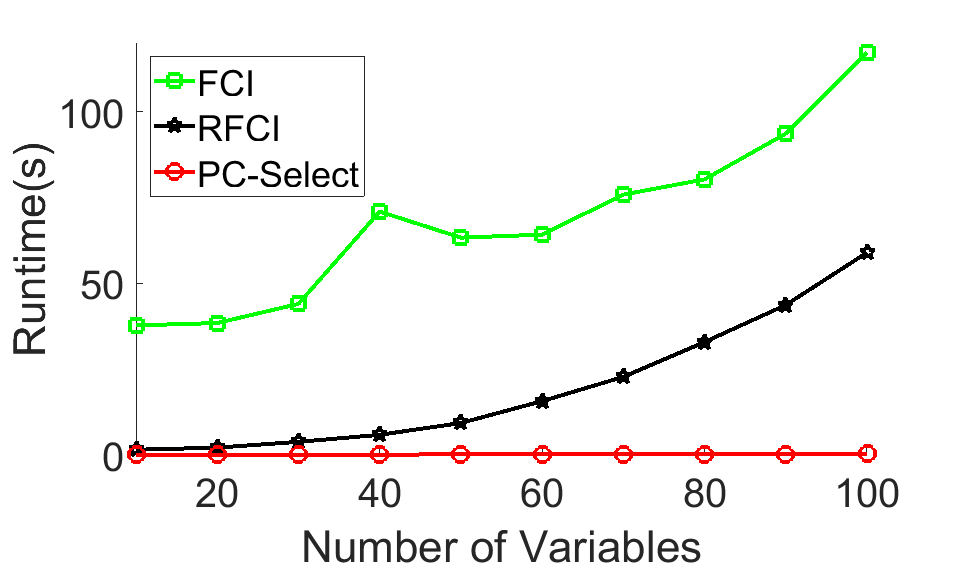}
\end{minipage}%
\hfil
\begin{minipage}[t]{0.5\linewidth}
\includegraphics[width=1.7in, height=1.0in]{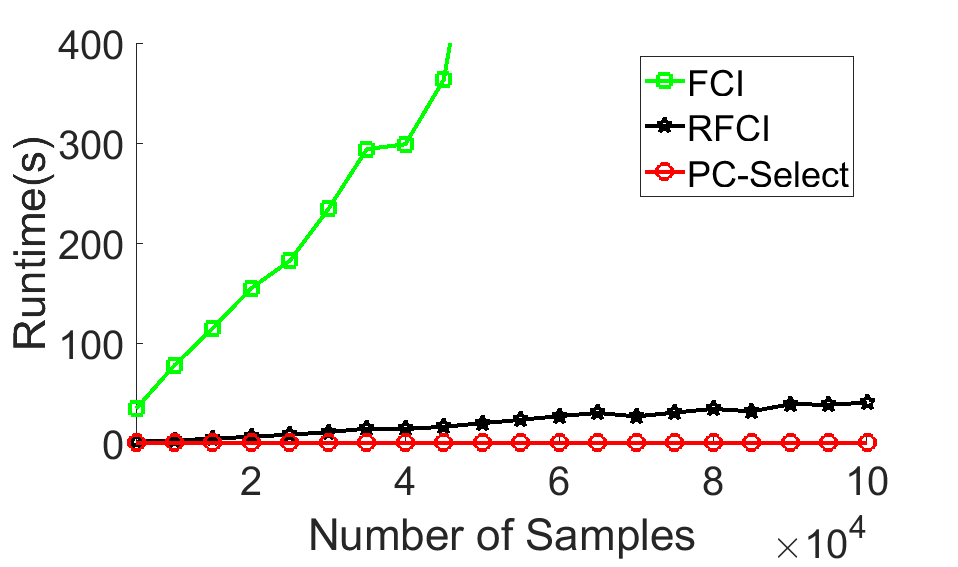}
\end{minipage}
\caption{The runtime on two groups of synthetic datasets w.r.t. the number of variables and samples.}
\label{fig:resultsscaleonsyntheticdataset}
\end{figure}

\begin{figure}[ht]
\scriptsize
\centering
\includegraphics[width=3.2in, height=1.45in]{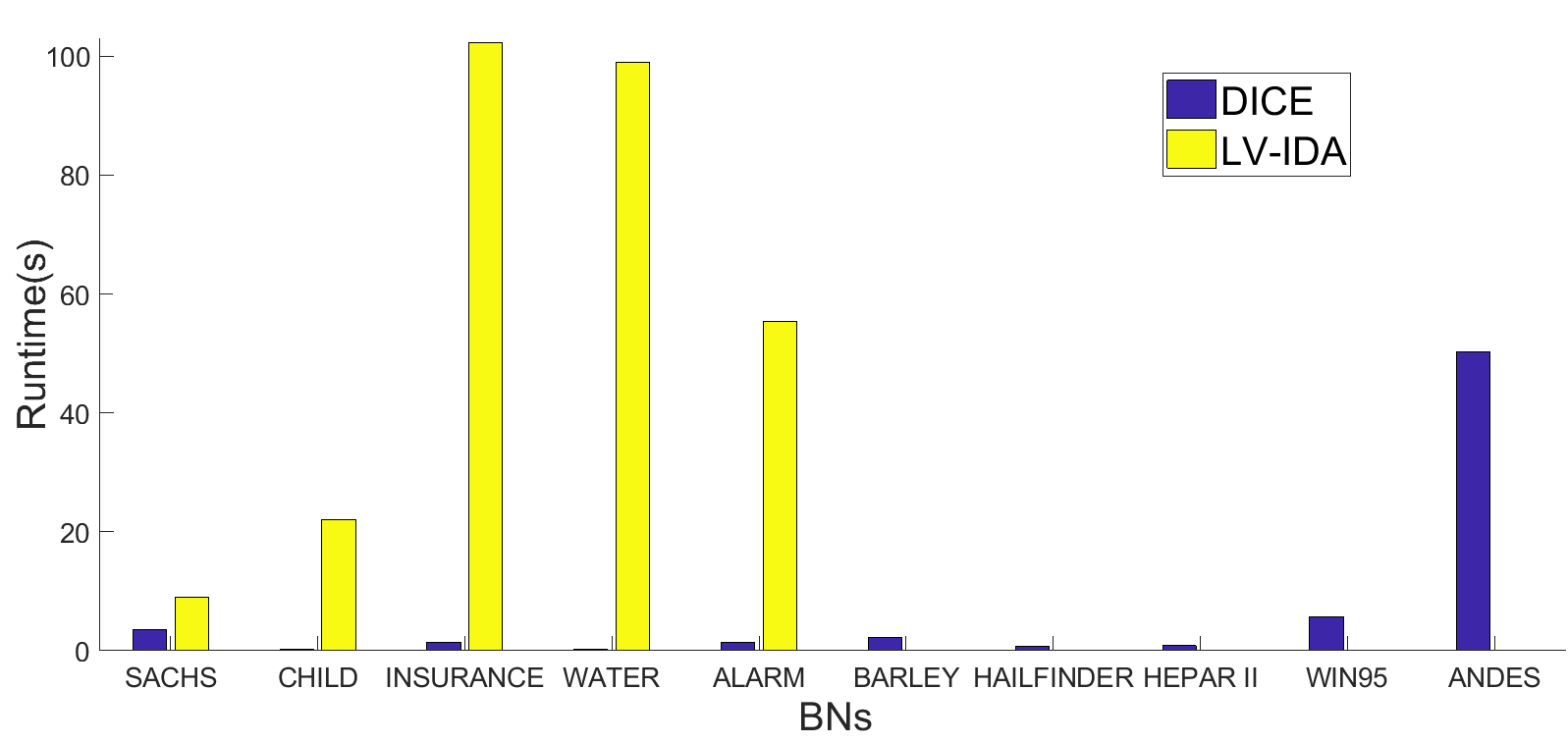}
\caption{The runtime on ten BNs. Note: LV-IDA did not return results in two hours on BARLEY, HAILFINDER, HEPAR II, WIN95 and ANDES.} %i.e. SACHS, CHID, INSURANCE and ALARM.}
\label{fig:resultsscale}
\end{figure}
\textbf{Experiments on ten standard benchmark BNs}. To evaluate the efficiency of DICE, we use the datasets generated from the ten benchmark BNs from the BN repository\footnote{http://www.bnlearn.com/bnrepository/}: SACHS, CHILD, INSURANCE, WATER, ALARM, BARLEY, HAILFINDER, HEPAR II, WIN95 and ANDES. With each BN, we choose a variable without child nodes as the outcome variable and then select one of its parents without child nodes (except the outcome) as the treatment variable. We generate ten synthetic datasets from the ten BNs with 5000 samples each using the $\mathbb{R}$ package \emph{bnlearn}~\cite{scutari2009learning}. Then we hide 5\% variables (only one variable is hidden for the small BNs SACHS and CHILD) which lie on a non-causal path between the treatment and outcome variables. The same parameter settings are used for DICE and LV-IDA as in Subsection~\ref{subsec::realworld}.

\textbf{Results.} As shown in Figure~\ref{fig:resultsscale}, DICE is faster than LV-IDA across all the 10 datasets. Especially, LV-IDA did not return results for the 5 datasets generated using the larger BNs within two hours, while DICE completed in seconds (at most 50+ seconds) for all datasets.

\section{Conclusion}
Causal query in data is an important means for evidence-based decision making, without relying on or being restricted by domain knowledge. However, its widespread applications are hindered by the low efficiency and unsatisfactory accuracy of existing methods. In this paper, we have developed a theorem which supports the utilisation of efficient local causal structure discovery for causal query and assures the correctness of the result of causal query obtained with local search. Based on the proposed theorem, we have developed DICE to query data for causal effects and the confounders potentially impacting the causal effects. The results returned by DICE provide decision makers not only valuable information on the effect of changing one variable in their systems, but also the awareness of the other factors which could influence the effect. Experiments with synthetic and real-world data have demonstrated the effectiveness and efficiency of DICE. It has been shown that DICE produces better causal effect estimation than LV-IDA, and works on datasets where LV-IDA fails. The causal effects estimated by DICE are as good as the state-of-the-art methods which use domain knowledge. In future, we will evaluate DICE and other methods on purely observational datasets which may include $M$-bias, and apply it to real-world biological applications.

\section{Acknowledgement}
We would like to thank the anonymous reviewers for their
constructive suggestions. We would also like to thank
Assoc. Prof. Jiji Zhang from Lingnan University for many
helpful comments. This research is partially funded by ARC DP170101306 and the National Science Foundation of China (under grant 61876206). The first author is supported by China Scholarship Council.

\bibliography{ecai2020}
\bibliographystyle{ecai}
\end{document}